\documentclass{article}

\usepackage{microtype}
\usepackage{graphicx}
\usepackage{subcaption}
\usepackage{booktabs}
\usepackage{hyperref}
\usepackage{algorithm}
\usepackage{algorithmic}

\usepackage[accepted]{icml2026}

\usepackage{amsmath}
\usepackage{amssymb}
\usepackage{amsthm}
\usepackage{mathtools}
\usepackage{multirow}
\usepackage{pifont}

\usepackage{amsmath,amsfonts,bm}

\def\eqref#1{equation~\ref{#1}}

\def\1{\bm{1}}

\DeclareMathAlphabet{\mathsfit}{\encodingdefault}{\sfdefault}{m}{sl}
\SetMathAlphabet{\mathsfit}{bold}{\encodingdefault}{\sfdefault}{bx}{n}

\DeclareMathOperator*{\argmax}{arg\,max}

\newtheorem{theorem}{Theorem}

\newtheorem{proposition}[theorem]{Proposition}

\icmltitlerunning{Trajectory-Level Speculative Decoding for Diffusion Language Models}

\begin{document}

\twocolumn[
\icmltitle{Trajectory-Level Speculative Decoding \\
for Diffusion Language Models}

\icmlsetsymbol{equal}{*}

\begin{icmlauthorlist}
\icmlauthor{Tianxiang Pan}{equal,comp}
\icmlauthor{Baitao Gong}{equal,comp}
\icmlauthor{Mo Guang}{comp}
\icmlauthor{Hongwei Yong}{comp}
\icmlauthor{Tianpeng Jiang}{comp}
\icmlauthor{Yaqian Li}{comp}
\icmlauthor{Zheng Cao}{comp}
\icmlauthor{Kaiwen Long}{comp}

\end{icmlauthorlist}

\icmlaffiliation{comp}{Li Auto Inc., Shanghai, China}
\icmlcorrespondingauthor{Yaqian Li}{liyaqian@lixiang.com}

\icmlkeywords{Machine Learning, Diffusion Models, Language Models, Speculative Decoding}

\vskip 0.3in
]

\printAffiliationsAndNotice{\icmlEqualContribution}

\begin{abstract}
Diffusion-based language models (dLLMs) enable parallel token generation through iterative denoising, but existing decoding strategies collapse to single-token generation under low confidence, severely limiting throughput. Unlike autoregressive models where speculative decoding operates on token sequences in a fixed left-to-right order, dLLMs require speculating over \emph{denoising trajectories}—sequences of multi-token updates with explicit positions and unmasking orders. We develop a trajectory-level speculative framework that constructs draft denoising trajectories via confidence-stratified tree exploration and verifies them through blockwise parallel evaluation with bidirectional attention masking. Our method further introduces inter-block speculation, exploiting diffusion models' bidirectional structure to perform cross-block lookahead. We formally characterize when this approach is exact and identify trajectory drift as the fundamental cost of increased parallelism. Building on Fast-dLLM's dual-cache infrastructure, our framework reduces denoising iterations by 30-40\% and increases tokens-per-step from 2.6 to 4.3, achieving 7-14$\times$ speedup over vanilla dLLMs and 1.3$\times$ over Fast-dLLM with less than 1\% accuracy change across reasoning and code benchmarks.
\end{abstract}

\section{Introduction}

Speculative decoding has emerged as a powerful technique for accelerating autoregressive language models~\citep{leviathan2023fast,chen2023accelerating}, enabling multiple tokens to be generated per forward pass through draft-and-verify mechanisms. However, these methods fundamentally rely on the causal, left-to-right structure of autoregressive generation and cannot be directly applied to diffusion-based language models (dLLMs)~\citep{nie2025large,ye2025dream}, which generate text through iterative parallel denoising rather than sequential token prediction.

Diffusion language models offer parallelism advantages: unlike autoregressive models that generate tokens sequentially, dLLMs can decode multiple tokens per denoising step. However, existing strategies (threshold-based or top-$k$ selection) work well under high confidence but collapse to single-token generation when confidence is low—\emph{low-confidence degeneration}—severely limiting throughput.

\textbf{Motivating observation.}
Through empirical analysis on Fast-dLLM~\citep{wu2025fast}, we observe that low-confidence degeneration events occur in over 80\% of decoding steps across reasoning and code-generation benchmarks. This severely limits throughput: while the model has the capacity to decode 32 tokens per step (one block), it often decodes only 1-3 tokens. Crucially, we find that even when a position has low confidence, the correct token frequently appears in the top-$k$ candidates ($k \approx 3$). This suggests an opportunity: rather than discarding low-confidence positions, we can speculate over multiple candidate trajectories in parallel and verify them jointly.

\begin{figure}[t]
\centering
\includegraphics[width=\columnwidth]{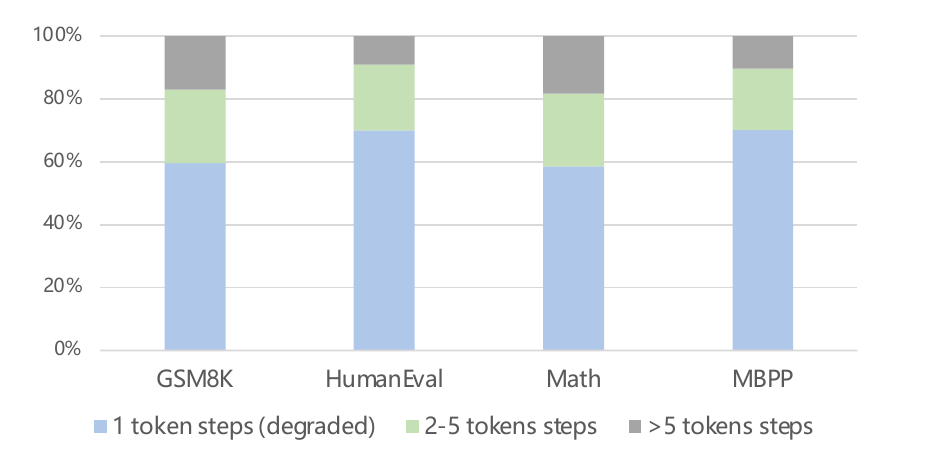}
\caption{Visualization of low-confidence degeneration in parallel decoding for dLLMs. When confidence drops below threshold, existing methods collapse to single-token generation, severely limiting throughput despite dLLMs' inherent parallel capacity.}
\label{fig:problemvis}
\end{figure}

\textbf{Why existing speculative decoding fails for dLLMs.}
Autoregressive speculative decoding~\citep{pmlr-v235-cai24b,li2024eagle} operates in \emph{token space}: it constructs draft token sequences in a fixed left-to-right order and verifies them using causal masking. This design exploits the deterministic structure of causal generation. In contrast, diffusion models generate through \emph{trajectory space}—iterative refinement where tokens are unmasked in arbitrary order via bidirectional attention. A "draft" in this setting must specify not just token identities, but also their positions and unmasking order across multiple denoising steps. This fundamental difference renders token-space speculation insufficient: diffusion models require trajectory-space speculation over sequences of multi-token updates, verified through blockwise attention that isolates competing denoising paths while preserving bidirectional dependencies.

\textbf{Our approach: trajectory-level speculation.}
We develop a framework operating in trajectory space, constructing and verifying candidate \emph{denoising trajectories}—complete sequences of block-level updates with explicit ordering and positions. This requires three components absent in token-space speculation:

(1) Tree-based draft trajectory construction. We construct a tree where each path encodes a complete denoising trajectory (not just token sequences). Nodes represent block-level updates, and paths capture alternative unmasking orders, exploiting the any-order generation capacity of diffusion models.

(2) Blockwise parallel verification. We design a blockwise attention masking scheme that isolates trajectory branches during verification while preserving bidirectional attention to context. This enables parallel evaluation of multiple independent denoising paths in a single forward pass—structurally different from causal verification in AR models.

(3) Inter-block speculation. Exploiting bidirectional visibility unique to diffusion models, we introduce cross-block lookahead: monitoring confidence in block $B_{t+1}$ while decoding $B_t$, and triggering joint speculation when conditions allow. This segmented parallelism has no analog in left-to-right AR decoding.

\textbf{Relationship to infrastructure and baselines.}
Our work builds on Fast-dLLM~\citep{wu2025fast}, which introduced a dual-cache mechanism that reduces per-step computation by reusing key-value activations of prefix and suffix tokens within each block. This provides the systems-level infrastructure on which our algorithmic contribution operates. Fast-dLLM's dual cache addresses per-step efficiency; our trajectory-level speculation addresses the number of steps required. The two optimizations are orthogonal and cumulative: Fast-dLLM reduces FLOP per step, while our method reduces steps overall. We evaluate our approach against Fast-dLLM (Dual Cache) as the baseline, demonstrating a 30\% reduction in end-to-end latency and a 40\% reduction in denoising iterations on top of this already-optimized system.

\textbf{Contributions.}
Our main contributions are:
\begin{itemize}
\item We identify low-confidence degeneration as a fundamental bottleneck in parallel diffusion decoding and formulate it as a trajectory-space speculation problem—showing that correct tokens often appear in top-$k$ candidates even under low confidence.
\item We develop a trajectory-level speculative decoding framework for dLLMs that operates over complete denoising paths rather than token sequences, including: (1) tree-based trajectory construction with confidence-stratified expansion, (2) blockwise parallel verification that isolates trajectory branches while preserving bidirectional context, and (3) inter-block speculation exploiting cross-block lookahead unique to bidirectional models.
\item We establish formal conditions under which trajectory-space speculation is exact (Theorem~\ref{thm:exactness}), characterize trajectory drift as the fundamental cost of increased parallelism, and provide acceptance rate bounds that justify our hybrid tree design.
\item Empirically, our method reduces denoising steps by 30-40\%, increases tokens-per-step from 2.6 to 4.3, and achieves 7-14$\times$ speedup over vanilla dLLMs and 1.3$\times$ over Fast-dLLM, with less than 1\% accuracy change across reasoning and code benchmarks.
\end{itemize}


\section{Background and Related Work}

\subsection{Masked Diffusion Language Models}

Masked Diffusion Language Models (MDLMs)~\citep{sahoo2024simple,zheng2023reparameterized} formulate text generation as an iterative denoising process. Given a target sequence $\mathbf{x}_0 = (x_1, \ldots, x_n)$, the forward diffusion gradually masks tokens:
\begin{equation}
\begin{split}
q(\mathbf{x}_t \mid \mathbf{x}_{t-1}) = \prod_{i=1}^n \big[(1-\beta_t)\,\delta(x_{t,i} = x_{t-1,i}) \\
+ \beta_t\,\delta(x_{t,i} = [\text{MASK}])\big],
\end{split}
\end{equation}
where $\beta_t$ is the corruption rate. The reverse process $p_\theta(\mathbf{x}_{t-1} \mid \mathbf{x}_t)$ progressively denoises masked positions back to concrete tokens. Representative models include LLaDA~\citep{nie2025large}, Dream~\citep{ye2025dream}, and Seed~\citep{song2025seed}.

\textbf{Parallel decoding strategies.}
Two primary approaches exist for parallel decoding in dLLMs:
(1) Top-$k$ decoding~\citep{nie2025large} selects the $k$ most confident tokens at each step, guaranteeing fixed parallelism but risking accuracy when low-confidence tokens are included.
(2) Threshold-based decoding~\citep{wu2025fast} accepts all tokens exceeding confidence threshold $\tau$, adapting parallelism to model certainty but collapsing to top-1 when no token passes the threshold.

Both strategies exhibit low-confidence degeneration: when faced with uncertain positions, they revert to single-token generation, severely limiting throughput.

\subsection{Speculative Decoding in Autoregressive Models}

Speculative decoding~\citep{leviathan2023fast,chen2023accelerating} accelerates autoregressive LLMs by drafting candidate continuations and verifying them in parallel. Representative methods include:

Multi-model approaches~\citep{xia2022speculative,miao2024specinfer} use a lightweight draft model to propose candidates, which a target model then verifies. Self-speculative methods~\citep{pmlr-v235-cai24b,li2024eagle} derive drafts from the target model itself using additional prediction heads or cached representations.

\textbf{Why autoregressive speculation does not transfer to dLLMs.}
Autoregressive speculative decoding relies on three properties absent in diffusion models:
\begin{enumerate}
\item Fixed generation order: Tokens are always generated left-to-right, so draft construction is deterministic given the prompt.
\item Causal verification: A single forward pass with causal masking suffices to verify all draft positions in parallel.
\item Exact acceptance: The verification procedure is mathematically lossless—accepted tokens match exactly what the target model would have produced.
\end{enumerate}

Diffusion models violate all three: they use bidirectional attention, generate tokens in arbitrary order, and support multiple valid denoising trajectories. A "draft" must therefore specify not just token identities but also their positions and unmasking order—a complete trajectory. Verification requires blockwise masking to isolate competing branches, and exactness cannot be guaranteed when altering the parallelism schedule.

\subsection{Speculative Approaches for Diffusion Models}

Recent work has begun exploring speculative techniques for diffusion models. \citet{de2025accelerated,hu2025diffusion} study speculative sampling for continuous denoising diffusion (DDPM), focusing on image generation. For discrete diffusion language models, several concurrent efforts have emerged:

\textbf{Self-speculative decoding}~\citep{gao2025self} introduces token-level speculation for diffusion LMs, where the model drafts individual token predictions that are then verified. This approach operates at token granularity and focuses on single-step speculation.

\begin{table}[t]
\centering
\caption{Comparison of speculative decoding methods for dLLMs.}
\label{tab:method_comparison}
\small
\setlength{\tabcolsep}{3pt}
\begin{tabular}{lccc}
\toprule
Property & Self-Spec & Spiffy & Ours \\
\midrule
Speculation granularity & Token & Block & Trajectory \\
Cross-block lookahead & \ding{55} & \ding{55} & \ding{51} \\
Tree-based exploration & \ding{55} & \ding{55} & \ding{51} \\
Lossless verification & \ding{51} & \ding{51} & $\sim$\ding{51} \\
Calibration overhead & Low & High & Low \\
Speedup (7B models) & $\sim$3.46$\times$ & $\sim$7$\times$ & $\sim$14$\times$ \\
\bottomrule
\end{tabular}
\end{table}

\textbf{Spiffy}~\citep{agrawal2025spiffy} proposes block-level speculative decoding for diffusion language models using graph-based calibration to maintain lossless verification. Their method performs parallel exploration over multiple draft blocks within a decoding block (intra-block speculation) and focuses on preserving exactness under speculative execution. In contrast, our method formulates speculation at the trajectory level, jointly modeling denoising trajectories that can span consecutive blocks through inter-block speculation. 

\textbf{Our approach} differs in problem formulation and method design: (1) we identify and address low-confidence degeneration—showing that parallelism collapses not due to incorrectness but due to fixed decoding policies—whereas concurrent methods focus on either token-level drafting or calibration for exactness, (2) we explore multiple trajectory alternatives via tree-structured drafts to sustain parallelism under uncertainty, enabling broader speculation than single-block or token-level approaches, (3) we introduce inter-block speculation that exploits bidirectional attention to perform cross-block lookahead, which single-block designs cannot express, and (4) our tree-based verification is simpler than graph-based calibration and achieves higher acceptance rates (3.8-4.3 tokens/step vs. Spiffy's reported rates).

Table~\ref{tab:method_comparison} contrasts key properties across the three approaches. Self-speculative decoding operates at token granularity, while both Spiffy and our method perform block-level speculative exploration for diffusion LMs. The key distinction is that Spiffy focuses on intra-block speculation with calibrated exact verification, whereas our method additionally introduces inter-block trajectory speculation across consecutive decoding blocks. We further report end-to-end TPS, latency, denoising steps, and accuracy together, while Spiffy primarily reports NFE-based speedup (see Appendix~\ref{app:concurrent} for detailed comparison). We note that other concurrent approaches may exist, and the landscape of diffusion LM acceleration continues to evolve.

\textbf{Complementarity with other acceleration techniques.}
Our trajectory-level speculation operates at the algorithmic level of the decoding process and is conceptually orthogonal to system-level optimizations such as KV cache management~\citep{wu2025fast}, architectural improvements (sparse attention, quantization), and hardware-specific optimizations. These methods target different components of the inference stack: we reduce the number of denoising iterations, while system optimizations reduce per-iteration cost. Combining these approaches is conceptually feasible and could yield cumulative benefits, though such integration is beyond the scope of this work.

\section{Method}

\subsection{Overview}

Our framework operates in three stages:
(1) Candidate collection: Gather top-$k$ tokens at low-confidence positions from the previous denoising step.
(2) Tree-based draft trajectory generation: Incrementally construct a tree of candidate trajectories, where each node represents a block of tokens and each path represents a possible denoising sequence.
(3) Parallel verification and acceptance: Evaluate all draft blocks jointly using blockwise attention masking, then select the longest valid path via greedy or longest-path verification.

These stages reuse the model's forward pass: the verification of step $t$ simultaneously provides logits for drafting step $t+1$, minimizing overhead.

\begin{figure}[t]
\centering
\includegraphics[width=\columnwidth]{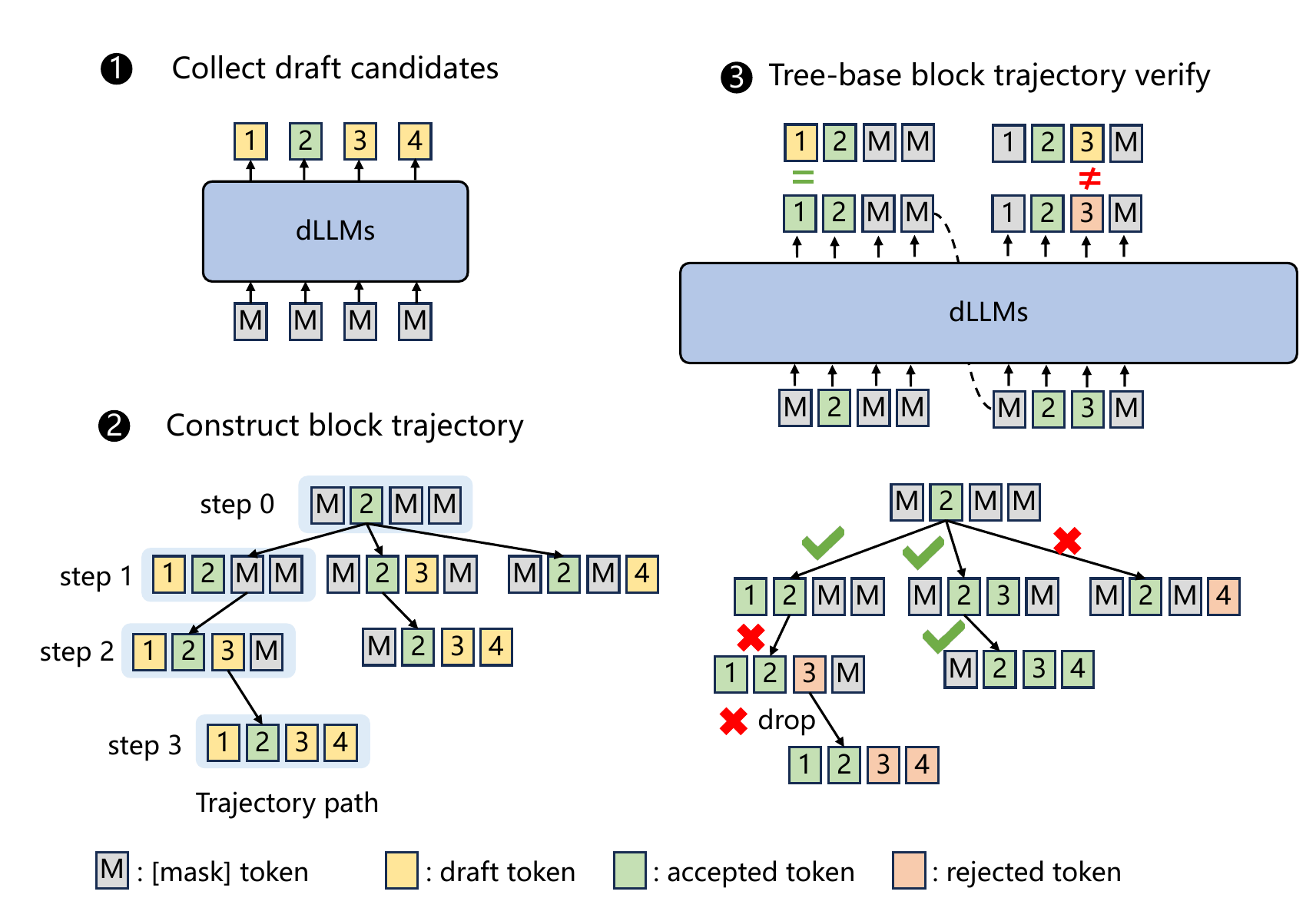}
\caption{Overview of trajectory-level speculative decoding framework. The method constructs a tree of candidate denoising trajectories, verifies them in parallel using blockwise attention masking, and selects the longest valid path.}
\label{fig:generalmodel}
\end{figure}

\subsection{Tree-Based Draft Trajectory Generation}

\textbf{Motivation: why low-confidence tokens matter.}
Existing parallel decoding strategies discard tokens below threshold $\tau$, treating low confidence as synonymous with incorrectness. This conflates two distinct phenomena: tokens with low confidence due to model uncertainty, and tokens excluded because they fall outside the parallelism budget. Our key observation: low confidence does not imply incorrectness—it often reflects ambiguity among several plausible alternatives~\citep{kim2025train}. Empirically, we find that even at low-confidence positions, the correct token frequently appears within the top-$k$ candidates ($k \approx 3$). This motivates a principled approach: rather than abandoning parallelism at uncertain positions, we explore multiple candidate trajectories in parallel and verify them jointly, thereby sustaining throughput without compromising correctness guarantees.

\textbf{Candidate collection.}
At each denoising step, we define token confidence as:
\begin{equation}
c(x) = \max(\text{softmax}(\text{logits}_t(x))).
\end{equation}
We collect top-$k$ candidates at positions where $c(x) < \tau$—i.e., tokens that are low-confidence but potentially correct. This forms the pool for trajectory exploration.

\textbf{Tree construction.}
We organize candidates into a tree where each node corresponds to a block of tokens (e.g., 32 tokens). The root node contains all high-confidence tokens (those exceeding $\tau$). Children represent alternative continuations constructed from the candidate pool. To balance coverage and computational cost, we adopt a \emph{hybrid expansion strategy}:
\begin{itemize}
\item Layer 1 (root): Expand with top-$k$ candidates to maximize diversity.
\item Deeper layers: Expand each node with only its top-1 continuation, yielding a compact tree that concentrates exploration along high-probability paths.
\end{itemize}

This design keeps the tree compact ($O(D \cdot W)$ nodes for depth $D$ and width $W$) while preserving high-likelihood trajectories. For example, a W2D2(3) tree has width 2, depth 2, and 3 total children blocks. Ablation studies (Section~\ref{sec:ablation}) show this hybrid strategy achieves the best acceptance-rate-to-overhead trade-off.

\textbf{Algorithm.}
Algorithm~\ref{alg:tree} implements \emph{confidence-stratified expansion}: high-confidence tokens form the root block, low-confidence positions trigger branching with adaptive depth control, balancing exploration and overhead.

\begin{algorithm}[t]
\caption{Confidence-Stratified Trajectory Tree Construction}
\label{alg:tree}
\begin{algorithmic}[1]
\STATE \textbf{Input:} Token logits $\mathcal{L}_t$ at step $t$, confidence threshold $\tau$, branching factor $k$, tree depth $D$, tree width $W$
\STATE \textbf{Output:} Draft trajectory tree $\mathcal{T}$ with root and leaf nodes
\vspace{0.1cm}
\STATE \textcolor{blue}{// Phase 1: Confidence stratification}
\STATE $\mathcal{H} \gets \{i : \max_v \text{softmax}(\mathcal{L}_t[i, v]) > \tau\}$ \hfill $\triangleright$ High-confidence positions
\STATE $\mathcal{L} \gets \{i : \max_v \text{softmax}(\mathcal{L}_t[i, v]) \leq \tau\}$ \hfill $\triangleright$ Low-confidence positions
\STATE Initialize root node $r \gets (\mathcal{H}, \{\arg\max_v \mathcal{L}_t[i, v]\}_{i \in \mathcal{H}})$
\vspace{0.1cm}
\STATE \textcolor{blue}{// Phase 2: Speculative expansion}
\STATE $\mathcal{T} \gets \{r\}$; $\text{frontier} \gets \{r\}$
\FOR{$\ell = 1$ to $D$}
    \STATE $\text{next\_frontier} \gets \emptyset$
    \FOR{each node $n \in \text{frontier}$}
        \IF{$\ell = 1$}
            \STATE $\text{children} \gets \text{GenerateTopK}(n, \mathcal{L}, k)$ \hfill $\triangleright$ Explore top-$k$ at root
        \ELSE
            \STATE $\text{children} \gets \text{GenerateTop1}(n, \mathcal{L})$ \hfill $\triangleright$ Greedy extension deeper
        \ENDIF
        \STATE $\text{next\_frontier} \gets \text{next\_frontier} \cup \text{children}$
    \ENDFOR
    \STATE $\text{frontier} \gets \text{PruneByConfidence}(\text{next\_frontier}, W)$ \hfill $\triangleright$ Keep top-$W$ by confidence
    \STATE $\mathcal{T} \gets \mathcal{T} \cup \text{frontier}$
\ENDFOR
\STATE \textbf{return} $\mathcal{T}$
\end{algorithmic}
\end{algorithm}

\textbf{Complexity analysis.}
\label{subsec:complexity}
A fully expanded trajectory tree would incur exponential cost in the number of branches.
Our method is specifically designed to avoid this regime through the constrained tree structure described above.

We adopt a \textit{hybrid expansion strategy}:
\begin{itemize}
    \item Only the root node branches with top-$k$ candidates, while all deeper levels follow top-1 expansion.
    \item Top-$k$ expansion at the root layer maximizes diversity where it matters most, while guaranteeing $O(W \cdot D)$ complexity instead of exponential scaling, where $W$ is the tree width and $D$ is the tree depth.
    \item The resulting candidates can then be efficiently processed via parallel blockwise verification.
\end{itemize}

Concretely, with width $W=3$ and depth $D=3$, this yields at most $W \times D = 9$ draft sequences per step, compared to $W^D = 27$ for a fully expanded tree—a $3\times$ reduction in the number of branches due to top-1 expansion in deeper levels.
Tree construction itself requires only $W \times D$ tensor copies of the current block (each of length $L$, the block size), amounting to $O(W \cdot D \cdot L)$ memory operations, followed by a single concatenated forward pass over $(W \cdot D + 1) \cdot L$ tokens.
This represents a constant-factor overhead relative to a standard single-branch step.
Moreover, relative to the total context length and generation length during inference, especially in long-sequence generation scenarios, the additional $W \cdot D \cdot L$ tokens introduced by tree construction constitute a lightweight overhead.

This design is not merely a heuristic simplification, but is motivated by our key observation that uncertainty in dLLMs is highly concentrated at a small number of high-entropy positions, while later steps typically exhibit much higher confidence.
Therefore, allocating branching capacity only at these critical positions is both efficient and sufficient.
Empirically, this constrained tree achieves a strong balance between acceptance rate and computational cost, as shown in the ablation studies in Section~\ref{sec:ablation}.
Given the increased overall latency and only marginal accuracy gains, we practically adopt the \texttt{W3D3(6)} variant rather than the full-tree variant \texttt{W3D3(9)}.

\subsection{Blockwise Parallel Verification}

\textbf{Attention masking for trajectory isolation.}
Unlike autoregressive models that use causal masking, we require a blockwise attention scheme that: (i) preserves bidirectional attention between draft blocks and cached prefix/suffix tokens, and (ii) isolates draft branches to prevent cross-interference.

Specifically, let $P$ and $S$ denote prefix (already-decoded) and suffix (remaining [\textsc{mask}]) tokens. For each draft block $B_i$ in the tree, we construct a mask where:
\begin{itemize}
\item All tokens in $B_i$ attend to all tokens in $P$ and $S$.
\item Tokens in $B_i$ do not attend to tokens in other draft blocks $B_j$ ($j \neq i$).
\end{itemize}

This design maintains the bidirectional structure required by diffusion models while ensuring independent evaluation of each trajectory branch. Figure~\ref{fig:attention_mask} illustrates the blockwise attention mask construction for a tree with multiple draft branches.

\begin{figure}[t]
\centering
\includegraphics[width=\columnwidth]{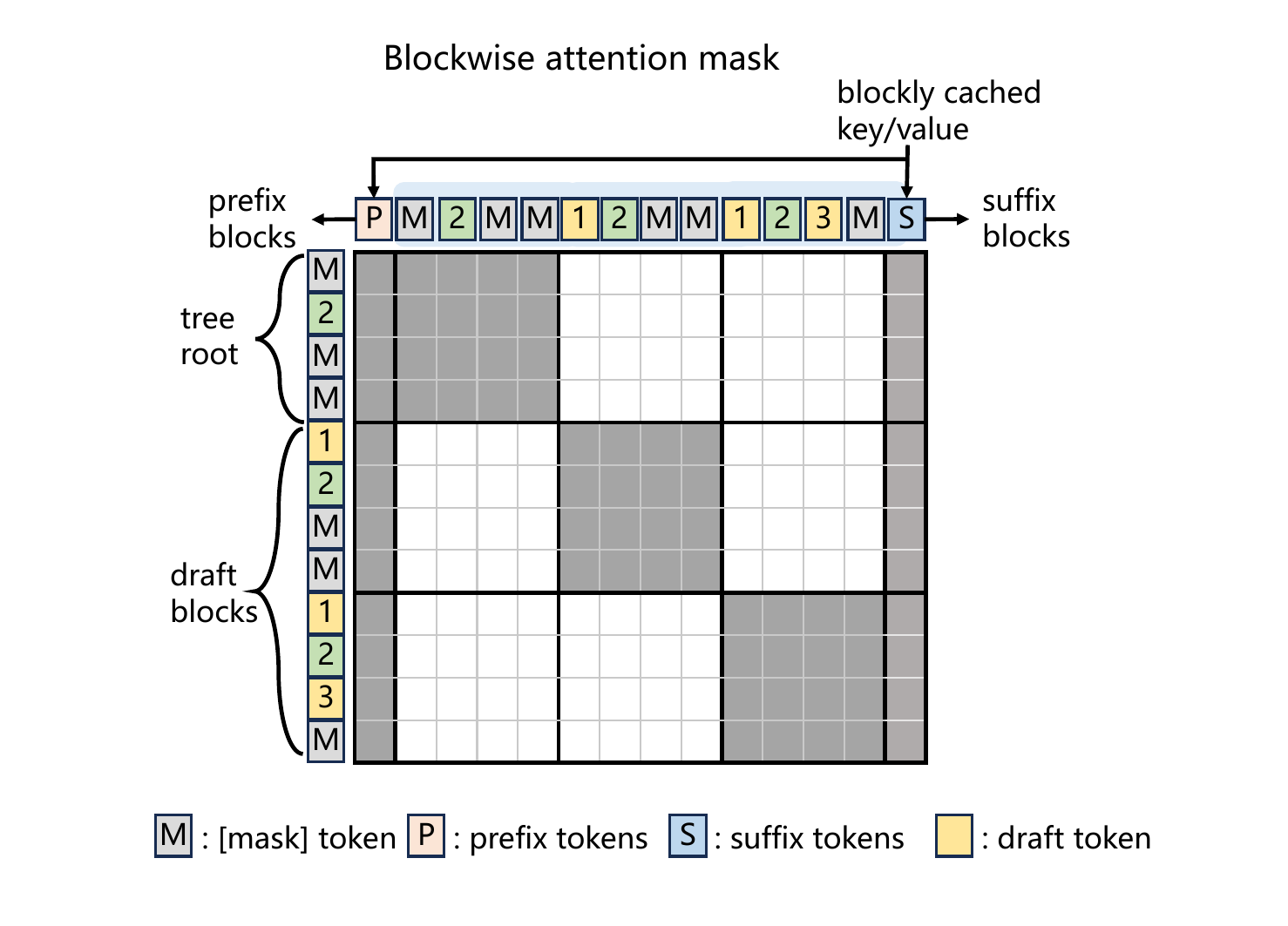}
\caption{Blockwise attention mask construction. Each draft block (shown in different colors) attends bidirectionally to cached prefix and suffix tokens (gray), but draft blocks remain isolated from each other to enable parallel verification of independent trajectories.}
\label{fig:attention_mask}
\end{figure}

\textbf{Verification procedure.}
After concatenating all draft blocks with the input sequence, we perform a single forward pass with the blockwise attention mask. For each block, we check whether its tokens match the model's top predictions (or fall within top-$k$ / exceed $\tau$). We traverse the tree path-by-path, accepting the first valid continuation at each level (greedy search) or the longest valid path overall (longest-path search). In practice, both strategies yield similar performance; we default to greedy search for lower overhead.

\subsection{Inter-Block Speculation}

\textbf{Motivation.}
The methods above accelerate decoding within a single block (intra-block speculation). However, diffusion models' bidirectional attention enables a further optimization: while decoding block $B_t$, we can simultaneously monitor block $B_{t+1}$. If early tokens in $B_{t+1}$ become confident before $B_t$ completes, we can initiate speculation for both blocks jointly.

\textbf{Trigger condition.}
Let $c_t^{(1)}$ and $c_{t+1}^{(1)}$ denote the top-1 confidence in blocks $B_t$ and $B_{t+1}$. We trigger inter-block speculation when:
\begin{equation}
c_{t+1}^{(1)} > c_t^{(1)} \quad \text{or} \quad c_{t+1}^{(1)} > \tau.
\end{equation}

\textbf{Joint tree construction.}
Upon triggering, we build a compact draft tree for $B_{t+1}$ (e.g., W1D3 with only 1-2 children blocks) and verify both trees jointly. The attention mask extends to preserve bidirectional visibility between the roots of $B_t$ and $B_{t+1}$—i.e., draft tokens in $B_{t+1}$ attend to the root of $B_t$, and vice versa. This leverages Fast-dLLM's observation~\citep{wu2025fast} that prefix/suffix KV states remain stable across adjacent denoising steps. Figure~\ref{fig:interblock} illustrates the inter-block speculation mechanism, showing how confidence in the next block can trigger joint speculative decoding across block boundaries.

\begin{figure}[t]
\centering
\includegraphics[width=\columnwidth]{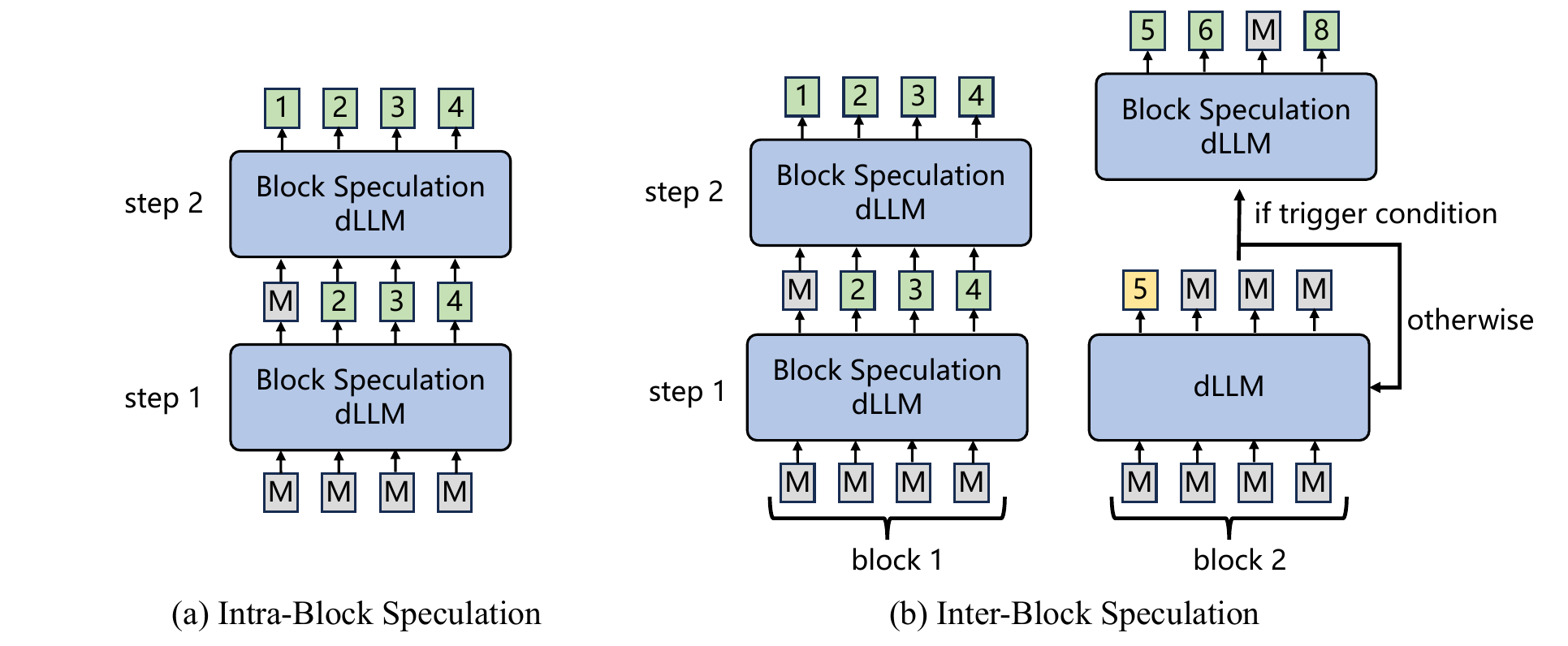}
\caption{Inter-block speculation mechanism. When block $B_{t+1}$ exhibits high early confidence, we construct compact draft trees for both $B_t$ and $B_{t+1}$ and verify them jointly, exploiting the bidirectional attention structure to enable cross-block lookahead.}
\label{fig:interblock}
\end{figure}

Inter-block speculation applies only to pairs of consecutive blocks; extending further would incur excessive verification overhead with diminishing returns. Detailed attention masks and ablation studies are provided in Appendix~\ref{app:interblock}.

\section{Theoretical Analysis}
\label{sec:theory}

Unlike autoregressive speculative decoding, which is mathematically exact, our trajectory-level framework introduces potential trajectory drift due to altered parallelism schedules. We characterize exactness conditions, provide bounds on acceptance rates, and identify when deviations may occur.

\subsection{Exactness Conditions}

\begin{theorem}[Exactness under idealized conditions]
\label{thm:exactness}
Under cached prefix-suffix KV states, no inter-block speculation, and deterministic top-1 decoding, our method produces a denoising trajectory $(i_1, y_1), \ldots, (i_N, y_N)$ identical to Fast-dLLM (Dual Cache), where $i_n$ denotes the position and $y_n$ the token selected at step $n$.
\end{theorem}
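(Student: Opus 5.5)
The argument is an induction on the number of denoising steps of the baseline. The claim is that each accepted draft block reproduces a contiguous segment of the Fast-dLLM (Dual Cache) trajectory. Two facts, derived from the hypotheses of Theorem~\ref{thm:exactness}, carry the proof.

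\textbf{Step 1 (state function).} Within a block, and with the prefix/suffix KV states cached and frozen, the logits at every masked position of the current block $B_t$ depend only on two things: the cache and the current partially unmasked contents of $B_t$. Call this $\mathcal{L}(s)$ for block state $s$. Deterministic top-1 decoding makes the baseline's update a deterministic map $F(s)$. At each step it unmasks the positions selected by the confidence rule (threshold $\tau$, falling back to the single argmax when nothing passes) and fills each with its argmax token. The baseline trajectory is then $s_0, F(s_0), F^2(s_0), \dots$, and the pairs $(i_n, y_n)$ are read off from the successive differences.

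\textbf{Step 2 (isolation).} The blockwise mask lets each draft block $B_i$ attend only to $P$, $S$ and its own tokens, never to a sibling draft. With no inter-block speculation, no cross-block edges are added. Hence the forward pass on a concatenated tree returns, for each draft node with state $s'$, exactly the logits $\mathcal{L}(s')$ that a standalone dual-cache step on $s'$ would produce. Positional encodings must be assigned as in the original block layout; I would state this as part of the idealization.

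\textbf{Step 3 (induction).} Suppose the accepted state equals the baseline state $s_n = F^n(s_0)$, with its logits already known from the previous verification pass. The root node is constructed as exactly $F(s_n)$. Any child is accepted only if its tokens coincide with the top-1 predictions under the verification logits of its parent state; under greedy top-1 acceptance, this means the child equals $F$ of its parent. By Step 2, the parent's logits are the true $\mathcal{L}$. Hence an accepted path of length $m$ equals $F(s_n), \dots, F^{m}(s_n)$. If no child is accepted, we keep $F(s_n)$ alone, which is one baseline step. In every case the state remains on the baseline trajectory, and block completion occurs at the same point. Across blocks, the same cache refresh is applied to the same decoded prefix, so the induction continues. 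Because the sequence of states coincides, the sequence of $(i_n, y_n)$ coincides, only grouped into fewer forward passes.

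\textbf{Main obstacle.} The delicate point is Step 3's acceptance test. The verification must check that the entire child update, meaning both positions and tokens, equals $F(\text{parent})$, and not merely that the drafted tokens are within top-$k$ or above $\tau$. A looser criterion could accept a state that is off the baseline trajectory, for example one that unmasks a set of positions the baseline would not choose. I would therefore make explicit that in the deterministic top-1 setting the acceptance rule compares the child against the parent's verified argmax and against the parent's selected position set. I would also argue that the "reuse verification logits for next draft" shortcut is valid only because of the frozen-cache assumption. That assumption is exactly what fails under inter-block speculation, where $B_{t+1}$'s drafts alter $B_t$'s context.
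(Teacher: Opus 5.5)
Your argument is correct. It shares the paper's skeleton: induction on denoising steps, where the frozen prefix/suffix cache plus an identical decoded history force identical logits, and hence identical argmax choices. The two differ in what they actually prove about the speculative machinery.

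The paper's inductive step simply asserts that at step $n$ both decoders evaluate $p(x_i = t \mid \mathcal{U}_n, C)$ and pick $i_n$ as the single most confident masked position. It never explains why a node evaluated inside one concatenated tree forward pass sees the same conditional as a standalone Fast-dLLM step. Nor does it explain why an accepted multi-node path corresponds to several consecutive baseline steps rather than to some other state. Your Step~2 supplies the first link: branch isolation under the blockwise mask, with position indices matching the original block layout. Your Step~3 supplies the second: an accepted child must equal $F$ of its parent.

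Your remark that the acceptance test must compare both the unmasked position set and the tokens against the parent's top-1 selection identifies a condition the paper's proof leaves implicit. The looser top-$k$ or threshold acceptance mentioned in the paper's verification section would break it.

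The paper's version buys brevity and a clean one-token-per-step indexing $(i_n,y_n)$. Your $F$ is more general, allowing several positions per step, which only regroups the pairs. Your version buys an argument that genuinely covers several baseline steps per forward pass.

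Alongside your other idealizations, you should list ``the root equals $F(s_n)$'' as an assumption rather than a fact. Algorithm~\ref{alg:tree} builds the root from all positions above $\tau$. That coincides with $F(s_n)$ only when the baseline uses the same threshold rule and at least one position passes it. It does not hold under the paper's own reading, where the baseline unmasks a single position per step.
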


\begin{proof}
We prove by strong induction on the number of denoising steps.

\noindent\emph{Base case} ($n=0$): Before the first denoising step, both decoders have identical initial states: all positions masked, no tokens decoded. Thus the trajectories match trivially.

\noindent\emph{Inductive hypothesis}: Assume that for all steps $k < n$, both decoders have produced identical trajectories: $(i_1, y_1), \ldots, (i_{n-1}, y_{n-1})$.

\noindent\emph{Inductive step}: We show that at step $n$, both decoders select the same position-token pair $(i_n, y_n)$.

Let $\mathcal{M}_n$ denote the set of masked positions at step $n$, and let $\mathcal{U}_n = \{(i_1, y_1), \ldots, (i_{n-1}, y_{n-1})\}$ be the set of unmasked tokens. By the inductive hypothesis, both decoders have identical $\mathcal{M}_n$ and $\mathcal{U}_n$.

\noindent\emph{Key observation}: Under cached KV states and no inter-block speculation, both decoders evaluate identical conditional distributions at step $n$. Specifically, for each position $i \in \mathcal{M}_n$, the probability distribution over tokens is:
\begin{equation}
p(x_i = t \mid \mathcal{U}_n, C) = \text{softmax}(f_\theta(i \mid \mathcal{U}_n, C)),
\end{equation}
where $C$ denotes the cached prefix and suffix KV states, and $f_\theta$ is the model's logit function.

Since both decoders use the same model $\theta$, the same cached context $C$, and have identical unmasked tokens $\mathcal{U}_n$, they compute identical logits:
\begin{equation}
f_\theta^{\text{ours}}(i \mid \mathcal{U}_n, C) = f_\theta^{\text{Fast-dLLM}}(i \mid \mathcal{U}_n, C) \quad \forall i \in \mathcal{M}_n.
\end{equation}

Under deterministic top-1 decoding, position $i_n$ is selected as:
\begin{equation}
i_n = \argmax_{i \in \mathcal{M}_n} \max_t p(x_i = t \mid \mathcal{U}_n, C),
\end{equation}
and token $y_n$ is selected as:
\begin{equation}
y_n = \argmax_t p(x_{i_n} = t \mid \mathcal{U}_n, C).
\end{equation}

Since both decoders compute identical distributions, they must select identical $(i_n, y_n)$. This completes the inductive step.

\noindent\emph{Conclusion}: By strong induction, the trajectories match at all steps, hence the full denoising sequences are identical.
\end{proof}

\textbf{Implications.}
Theorem~\ref{thm:exactness} establishes that trajectory-space speculation can be made exact under controlled conditions, formally characterizing when our method preserves the baseline's output distribution. This provides a principled foundation: all deviations stem exclusively from increased parallelism (trajectory drift), not from the speculation mechanism itself. The theorem identifies the exactness boundary and clarifies that any accuracy-speed trade-offs arise from \emph{choosing} higher parallelism, not from algorithmic approximation.

This formalization has practical consequences for algorithm design: to maximize speedup while maintaining exactness guarantees, we must (1) preserve cached KV states across speculative branches to ensure logit consistency, (2) disable inter-block speculation when exactness is required, and (3) verify that tree construction explores only positions and tokens that would be reachable under the baseline's conditional distributions. Our implementation respects these constraints, enabling deployment modes ranging from exact (matching baseline) to high-throughput (controlled drift).

Empirically, we verify this by running both decoders in top-1 mode on HumanEval and GSM8K; outputs and logits match exactly (see Appendix~\ref{app:exactness}, Table~A3), confirming that our implementation correctly preserves the idealized behavior characterized by the theorem.

\subsection{Acceptance Rate Analysis}

The effectiveness of our approach depends on the acceptance rate of draft trajectories. We provide a probabilistic characterization:

\begin{proposition}[Acceptance rate lower bound]
\label{prop:acceptance}
Let $\mathcal{T}$ be a draft tree with depth $D$ and let $\alpha_\ell$ denote the probability that a draft node at level $\ell$ matches the verifier's top-1 prediction. The expected number of accepted tokens is lower-bounded by:
\begin{equation}
\mathbb{E}[\text{accepted}] \geq \sum_{\ell=1}^D \prod_{j=1}^\ell \alpha_j \cdot b_\ell,
\end{equation}
where $b_\ell$ is the block size at level $\ell$.
\end{proposition}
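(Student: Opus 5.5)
We would prove Proposition~\ref{prop:acceptance} by a linearity-of-expectation argument over the levels of the tree, restricted to a single root-to-leaf path.

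\textbf{Step 1: reduce to one path.} Fix one root-to-leaf path in $\mathcal{T}$, say the greedy path followed by the verification procedure of Section~3.3, with nodes $n_1,\dots,n_D$ at levels $1,\dots,D$. Let $A_\ell$ be the event that node $n_\ell$ matches the verifier's top-1 prediction. Because acceptance is prefix-closed, if every node on this path through level $\ell$ matches, the verifier accepts at least the blocks $n_1,\dots,n_\ell$. The number of tokens actually accepted (greedy or longest-path) is therefore at least
\begin{equation}
X=\sum_{\ell=1}^D b_\ell\,\mathbf{1}\Big[\textstyle\bigcap_{j\le \ell}A_j\Big].
\end{equation}
Other branches, or partial matches inside a block, can only add to the accepted count. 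Discarding them is the source of the inequality.

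\textbf{Step 2: take expectations.} By linearity, $\mathbb{E}[X]=\sum_\ell b_\ell\,\Pr[\bigcap_{j\le\ell}A_j]$. We then interpret $\alpha_j$ as the conditional match probability at level $j$ given that all ancestors matched, as in standard speculative-decoding analyses. The chain rule then gives $\Pr[\bigcap_{j\le \ell}A_j]=\prod_{j\le\ell}\alpha_j$, which yields the stated bound.

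\textbf{Main obstacle.} The obstacle is that the statement defines $\alpha_\ell$ only as a marginal match probability at level $\ell$. The chain rule therefore needs either the conditional reading above or an independence assumption across levels. We would make the conditional interpretation explicit, since it matches how the verifier evaluates a node under its accepted ancestors via the blockwise mask.

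As a fallback, if only marginals are available, we would define $\alpha_\ell$ as a lower bound on the match probability conditioned on any matching prefix. The chain-rule equality then becomes an inequality in the correct direction, and the bound still holds.

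\textbf{Width only helps.} A second point is that width $W>1$ at the root can only increase acceptance. Replacing the random best path with the fixed path $n_1,\dots,n_D$ (for example the top-1 branch, whose $\alpha_j$ are the ones referenced) gives a lower bound, so this direction is safe.
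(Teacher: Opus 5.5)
Your proof is correct and takes essentially the same route as the paper's sketch. You restrict to a root-to-leaf path, lower-bound the accepted count by $\sum_{\ell} b_\ell$ times the indicator that the path matches through level $\ell$, and apply linearity of expectation; your conditional reading of $\alpha_j$ via the chain rule replaces the paper's bare ``assuming independence'' and makes the product $\prod_{j\le\ell}\alpha_j$ exact rather than assumed. One care point: under greedy verification the fixed path should be the branch greedy tries first (e.g.\ the top-1 branch), because greedy need not dominate an arbitrary fixed path, whereas under longest-path search any fixed path works.
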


\begin{proof}[Proof sketch]
Consider a path from root to level $\ell$. The probability that all nodes along this path are accepted is $\prod_{j=1}^\ell \alpha_j$ (assuming independence). If accepted, this path contributes $\sum_{j=1}^\ell b_j \geq b_\ell$ tokens. Summing over all paths and applying linearity of expectation yields the lower bound.
\end{proof}

\textbf{Design implications.}
This bound shows that acceptance rates compound multiplicatively across tree depth. This motivates our hybrid expansion strategy: using top-$k$ at the root maximizes $\alpha_1$ (the most important factor), while using top-1 deeper keeps the tree compact without severely degrading $\alpha_\ell$ for $\ell > 1$.

\textbf{Empirical validation.}
We measure acceptance probabilities on HumanEval: $\alpha_1 \approx 0.85$ for high-confidence root blocks and $\alpha_2 \approx 0.68$ for secondary branches. With block size $b=32$, this yields an expected acceptance of $\mathbb{E}[\text{accepted}] \geq 0.85 \times 32 + 0.85 \times 0.68 \times 32 \approx 45.6$ tokens per tree, explaining our observed 3.8 tokens-per-step in Table~\ref{tab:ablation} (accounting for verification overhead and rejected branches).

\section{Experiments}

\subsection{Experimental Setup}

\textbf{Models and baselines.}
We evaluate on LLaDA-Instruct-7B~\citep{nie2025large} and Dream-Instruct-7B~\citep{ye2025dream}. Our baseline is Fast-dLLM (Dual Cache)~\citep{wu2025fast}, which caches prefix and suffix KV states within each block to reduce per-step computation. We compare against vanilla dLLMs, Fast-dLLM (single cache), and concurrent work (Spiffy~\citep{agrawal2025spiffy}, DPad~\citep{chen2025dpad}).

\textbf{Benchmarks.}
We use mathematical reasoning (GSM8K, MATH) and code generation (HumanEval, MBPP) tasks. Generation length is fixed to 512 tokens with block size 32 to ensure fair comparison. Unless otherwise noted, we use the W2D2(3) tree structure on A800 GPUs, as it provides the best latency-efficiency trade-off (Section~\ref{sec:overhead}). This configuration represents typical deployment settings for current discrete diffusion LLMs; we analyze longer generation lengths (up to 2048 tokens) in Appendix A.9, finding consistent speedup patterns.

\textbf{Metrics.}
We report: (1) TPS (tokens per second) and latency, (2) tokens-per-step, (3) denoising steps, and (4) task accuracy. All experiments use a single A800 GPU unless specified.

\subsection{Main Results}

Tables~\ref{tab:llada_results} and~\ref{tab:dream_results} summarize performance on LLaDA and Dream across four benchmarks. Our method consistently reduces denoising steps by 30-43\% relative to Fast-dLLM (Dual Cache) and increases tokens-per-step from 2.6 to 3.8-4.3. This translates to 1.2-1.4$\times$ additional speedup on top of Fast-dLLM's already-optimized baseline, achieving 7-14$\times$ end-to-end speedup over vanilla dLLMs. Accuracy remains within 1\% of the baseline, confirming that trajectory drift is minimal.

\textbf{Task-specific analysis.}
Performance varies by task characteristics. For \textbf{GSM8K} (mathematical reasoning with structured outputs), our method achieves the highest speedup (13.6$\times$ on LLaDA) due to predictable token patterns that enable reliable tree acceptance. For \textbf{HumanEval} (code generation), moderate speedup (6.8$\times$) reflects the challenge of speculating syntax-heavy sequences where single-token errors propagate. For \textbf{MATH} (complex multi-step reasoning), speedup (10.2$\times$) balances between GSM8K's predictability and HumanEval's sensitivity. On \textbf{MBPP} (Python programming), we observe similar patterns to HumanEval, with 11.8$\times$ speedup and high accuracy maintenance (13.4\% vs. 13.6\% Fast-dLLM).

\begin{table}[t]
\centering
\caption{Performance on LLaDA-7B-Instruct (A800).}
\label{tab:llada_results}
\resizebox{\columnwidth}{!}{
\begin{tabular}{llcccc}
\toprule
Benchmark & Method & TPS↑ & Steps↓ & Acc.↑ & Speedup \\
\midrule
\multirow{3}{*}{GSM8K}
& Vanilla & 4.1 & 512.0 & 76.6 & 1.0$\times$ \\
& Fast-dLLM (DC) & 48.8 & 110.3 & 75.6 & 11.9$\times$ \\
& Ours & \textbf{55.7} & \textbf{71.3} & 75.5 & \textbf{13.6$\times$} \\
\midrule
\multirow{3}{*}{HumanEval}
& Vanilla & 13.8 & 512.0 & 43.9 & 1.0$\times$ \\
& Fast-dLLM (DC) & 70.6 & 179.2 & 45.7 & 5.1$\times$ \\
& Ours & \textbf{94.4} & \textbf{106.7} & 46.3 & \textbf{6.8$\times$} \\
\midrule
\multirow{3}{*}{MBPP}
& Vanilla & 4.8 & 512.0 & 14.8 & 1.0$\times$ \\
& Fast-dLLM (DC) & 48.5 & 132.8 & 13.6 & 10.1$\times$ \\
& Ours & \textbf{56.4} & \textbf{81.7} & 13.4 & \textbf{11.8$\times$} \\
\midrule
\multirow{3}{*}{MATH}
& Vanilla & 7.2 & 512.0 & 36.8 & 1.0$\times$ \\
& Fast-dLLM (DC) & 60.7 & 167.1 & 35.7 & 8.4$\times$ \\
& Ours & \textbf{73.5} & \textbf{96.8} & 36.0 & \textbf{10.2$\times$} \\
\bottomrule
\end{tabular}
}
\end{table}

\begin{table}[t]
\centering
\caption{Performance on Dream-7B-Instruct (A800).}
\label{tab:dream_results}
\resizebox{\columnwidth}{!}{
\begin{tabular}{llcccc}
\toprule
Benchmark & Method & TPS↑ & Steps↓ & Acc.↑ & Speedup \\
\midrule
\multirow{3}{*}{GSM8K}
& Vanilla & 9.0 & 512.0 & 76.8 & 1.0$\times$ \\
& Fast-dLLM (DC) & 50.6 & 280.6 & 74.6 & 5.6$\times$ \\
& Ours & \textbf{62.2} & \textbf{164.9} & 73.7 & \textbf{6.9$\times$} \\
\midrule
\multirow{3}{*}{HumanEval}
& Vanilla & 16.9 & 512.0 & 54.2 & 1.0$\times$ \\
& Fast-dLLM (DC) & 52.3 & 295.0 & 51.2 & 3.1$\times$ \\
& Ours & \textbf{78.1} & \textbf{180.7} & 54.3 & \textbf{4.6$\times$} \\
\bottomrule
\end{tabular}
}
\end{table}

\subsection{Ablation: Tree Structure and Inter-Block}
\label{sec:ablation}

Table~\ref{tab:ablation} ablates tree configurations and inter-block speculation on HumanEval. Larger trees (W3D3) provide higher acceptance rates but incur heavier per-step verification cost, resulting in worse latency on A800. Inter-block speculation further reduces steps by $\sim$12\% and increases tokens-per-step to 4.28, demonstrating the value of cross-block lookahead.

\begin{table}[t]
\centering
\caption{Ablation on LLaDA-7B (HumanEval, A800).}
\label{tab:ablation}
\small
\setlength{\tabcolsep}{5pt}
\begin{tabular}{lccc}
\toprule
Configuration & Tokens/Step & Steps & Latency (s) \\
\midrule
Fast-dLLM (DC) & 2.61 & 179.2 & 6.6 \\
\midrule
+ Intra W1D1(1) & 3.51 & 134.5 & 5.6 \\
+ Intra W2D2(3) & 3.83 & 124.0 & \textbf{5.3} \\
+ Intra W3D3(6) & \textbf{3.95} & \textbf{120.1} & 5.5 \\
\midrule
+ Inter W2D2(3) & 4.28 & 106.7 & \textbf{5.0} \\
+ Inter W3D3(6) & \textbf{4.52} & \textbf{105.1} & 5.6 \\
\bottomrule
\end{tabular}
\end{table}

\subsection{Computational Overhead Analysis}
\label{sec:overhead}

Figure~\ref{fig:computehead} analyzes per-step latency as a function of the number of draft blocks. On A800, exploring up to 4 draft blocks keeps overhead acceptable ($<$30\%), aligning with our observed 30-40\% step reduction. On H800, higher compute headroom enables larger structures (W3D3) to achieve better end-to-end latency. This roofline-style analysis confirms that the optimal tree size depends on hardware capability and the step-reduction benefit.

\begin{figure}[t]
\centering
\includegraphics[width=\columnwidth]{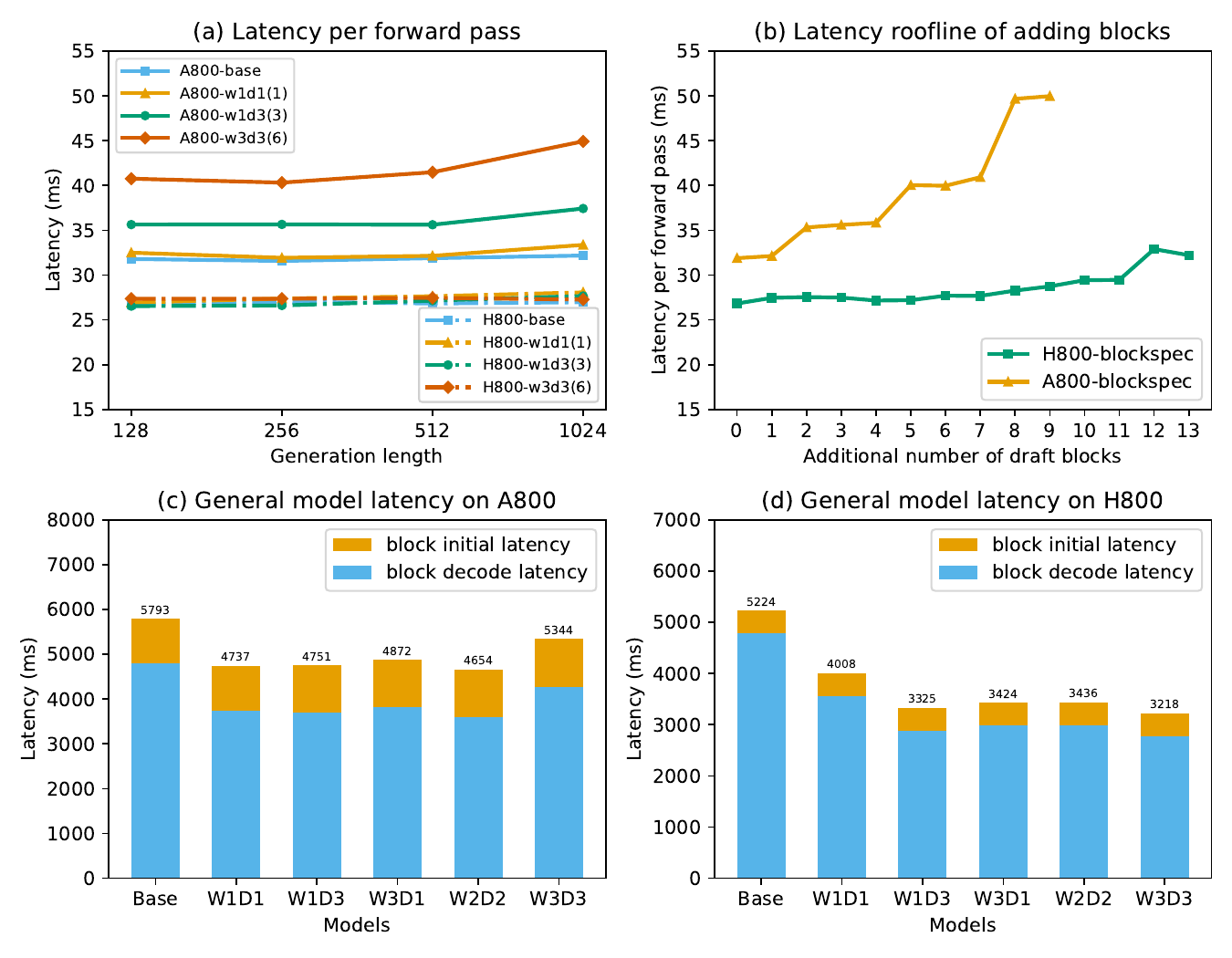}
\caption{Computational overhead analysis. Per-step latency increases with the number of draft blocks, but the reduction in total steps yields net speedup. Dashed line indicates cumulative step reduction. Optimal tree size depends on hardware capability and the overhead-reduction trade-off.}
\label{fig:computehead}
\end{figure}

\textbf{Tree construction overhead.}
To quantify the non-model cost of tree construction, we measure the normalized overhead ratio (as a percentage of total forward pass time) on A100 across tree configurations and sequence lengths (Table~\ref{tab:tree_overhead}). 
Tree construction time was measured using PyTorch CUDA Events and normalized against model forward passes. Model forward latency was profiled with Nsight Systems.
For our default W2D2 configuration, overhead stays below 2\% at sequence length 512 and decreases further at 2048.
Overhead grows with tree size but remains consistently lightweight, confirming that constrained tree construction adds negligible cost beyond the model forward pass.

\begin{table}[t]
\centering
\caption{Tree construction overhead ratio (\%) on A100 across sequence lengths.}
\label{tab:tree_overhead}
\small
\setlength{\tabcolsep}{5pt}
\begin{tabular}{lcccc}
\toprule
& \multicolumn{4}{c}{Sequence Length} \\
\cmidrule(lr){2-5}  
Configuration & 32 & 128 & 512 & 2048 \\
\midrule
w1d1 & 2.52 & 3.50 & 1.27 & 0.86 \\
w2d1 & 7.57 & 6.94 & 1.69 & 1.15 \\
w3d1 & 6.50 & 5.80 & 1.54 & 1.08 \\
w2d2 & 8.25 & 6.72 & 1.86 & 1.29 \\
w2d3 & 7.64 & 4.97 & 1.80 & 1.35 \\
w3d3 & 8.22 & 5.15 & 1.99 & 1.39 \\
w7d1 & 7.87 & 4.83 & 2.04 & 1.48 \\
w8d1 & 6.01 & 4.05 & 1.84 & 1.28 \\
w9d1 & 6.07 & 4.10 & 1.92 & 1.38 \\
\bottomrule
\end{tabular}
\end{table}

\textbf{Attention kernel latency.}
For block-level inference, the chip's roofline characteristics provide redundant space, which allows to add more computational load within the same time cost until reaching the compute bound. 
Table~\ref{tab:attn_latency} reports absolute fused multi-head attention kernel latency (in $\mu$s) on A100 and H100.
Compact configurations (W1D1 through W2D2) incur only marginal overhead relative to the single-block baseline (W1D1: $\sim$60~$\mu$s on A100), while larger trees approach the roofline limit.
The roofline critical points of A100 and H100 are around 200 and 300 concurrent tokens respectively, corresponding to approximately 6 and 9 additional draft blocks with block size 32—well above our default W2D2(3) configuration, leaving ample headroom before compute saturation.
This analysis explains why larger trees such as W3D3 yield diminishing latency returns on A800 but remain beneficial on H800, as shown in Figure~\ref{fig:computehead}.

\begin{table}[t]
\centering
\caption{Fused multi-head attention latency (in $\mu$s) on A100 and H100.}
\label{tab:attn_latency}
\small
\setlength{\tabcolsep}{5pt}
\begin{tabular}{lcc}
\toprule
Configuration & A100 ($\mu$s) & H100 ($\mu$s) \\
\midrule
w1d1 & 60.38 & 37.22 \\
w2d1 & 60.22 & 36.03 \\
w3d1 & 63.14 & 37.76 \\
w2d2 & 63.23 & 37.82 \\
w2d3 & 74.18 & 43.23 \\
w3d3 & 97.95 & 41.95 \\
w7d1 & 99.26 & 44.24 \\
w8d1 & 111.46 & 69.34 \\
w9d1 & 114.50 & 71.30 \\
\bottomrule
\end{tabular}
\end{table}

\section{Discussion and Future Directions}

\textbf{Trajectory drift and exactness.}
Unlike autoregressive speculative decoding, our method introduces small trajectory deviations when increasing parallelism. Theorem~\ref{thm:exactness} establishes that the method is exact under top-1 decoding, but higher parallelism can cause drift. Empirically, accuracy loss remains under 1\%, and failures concentrate on problems requiring precise multi-step reasoning (Appendix~\ref{app:failure}). Developing provably lossless verification for diffusion models remains an open challenge. Potential approaches include: (1) \textit{confidence-aware verification} that adaptively reduces parallelism when trajectory drift risk is high, (2) \textit{rollback mechanisms} that detect and correct divergence mid-generation, and (3) \textit{joint optimization} of the diffusion schedule and speculation strategy to minimize drift while maintaining speedup.

\textbf{Adaptive tree construction.}
The acceptance rate of draft trajectories depends on model confidence, task difficulty, and tree structure. Currently, we rely on heuristic tree designs (e.g., W2D2). Learned acceptance predictors could dynamically adjust tree depth and width based on: (1) historical acceptance rates on similar prompts, (2) confidence distributions at the current step, and (3) task-specific patterns (e.g., code generation benefits from narrower trees). This would enable instance-adaptive speculation that maximizes efficiency per prompt.

\section{Conclusion}

We introduced trajectory-level speculative decoding, the first speculative framework designed specifically for diffusion language models. By constructing tree-structured draft trajectories, performing blockwise parallel verification, and introducing inter-block speculation, our method addresses the fundamental challenges that prevent autoregressive speculative techniques from transferring to dLLMs. Theoretical analysis establishes exactness conditions and identifies trajectory drift as the primary source of deviation. Empirically, we achieve 7-14$\times$ speedup over vanilla dLLMs and 1.3$\times$ over the Fast-dLLM baseline, with minimal accuracy loss.

\section*{Acknowledgements}
We would like to sincerely thank the anonymous reviewers for their valuable comments and constructive suggestions, which have greatly helped us improve the quality and presentation of this paper. We also appreciate our colleagues for their insightful discussions and valuable feedback throughout the research work. We would like to thank Li Auto Inc. for providing financial support and research resources to this work.

\section*{Impact Statement}

This paper presents work whose goal is to advance the field of machine learning by improving inference efficiency for diffusion-based language models. The acceleration techniques we propose could reduce computational costs and energy consumption for deploying these models. There are no specific ethical concerns unique to this work beyond those inherent to large language model deployment in general.

\bibliography{iclr2026_conference}

\begin{thebibliography}{19}
\providecommand{\natexlab}[1]{#1}
\providecommand{\url}[1]{\texttt{#1}}
\expandafter\ifx\csname urlstyle\endcsname\relax
  \providecommand{\doi}[1]{doi: #1}\else
  \providecommand{\doi}{doi: \begingroup \urlstyle{rm}\Url}\fi

\bibitem[Agrawal et~al.(2025)Agrawal, Garrepalli, Goel, Lee, Lott, and
  Porikli]{agrawal2025spiffy}
Agrawal, S., Garrepalli, R., Goel, R., Lee, M., Lott, C., and Porikli, F.
\newblock Spiffy: Multiplying diffusion llm acceleration via lossless
  speculative decoding.
\newblock \emph{arXiv preprint arXiv:2509.18085}, 2025.

\bibitem[Cai et~al.(2024)Cai, Li, Geng, Peng, Lee, Chen, and
  Dao]{pmlr-v235-cai24b}
Cai, T., Li, Y., Geng, Z., Peng, H., Lee, J.~D., Chen, D., and Dao, T.
\newblock Medusa: Simple {LLM} inference acceleration framework with multiple
  decoding heads.
\newblock In Salakhutdinov, R., Kolter, Z., Heller, K., Weller, A., Oliver, N.,
  Scarlett, J., and Berkenkamp, F. (eds.), \emph{Proceedings of the 41st
  International Conference on Machine Learning}, volume 235 of
  \emph{Proceedings of Machine Learning Research}, pp.\  5209--5235. PMLR,
  21--27 Jul 2024.

\bibitem[Chen et~al.(2023)Chen, Borgeaud, Irving, Lespiau, Sifre, and
  Jumper]{chen2023accelerating}
Chen, C., Borgeaud, S., Irving, G., Lespiau, J.-B., Sifre, L., and Jumper, J.
\newblock Accelerating large language model decoding with speculative sampling.
\newblock \emph{arXiv preprint arXiv:2302.01318}, 2023.

\bibitem[Chen et~al.(2025)Chen, Huang, Guo, Wei, He, Zhang, Li, Chen,
  et~al.]{chen2025dpad}
Chen, X., Huang, S., Guo, C., Wei, C., He, Y., Zhang, J., Li, H., Chen, Y.,
  et~al.
\newblock Dpad: Efficient diffusion language models with suffix dropout.
\newblock \emph{arXiv preprint arXiv:2508.14148}, 2025.

\bibitem[De~Bortoli et~al.(2025)De~Bortoli, Galashov, Gretton, and
  Doucet]{de2025accelerated}
De~Bortoli, V., Galashov, A., Gretton, A., and Doucet, A.
\newblock Accelerated diffusion models via speculative sampling.
\newblock \emph{arXiv preprint arXiv:2501.05370}, 2025.

\bibitem[Gao et~al.(2025)Gao, Ji, Wang, Qi, Xu, and Zhang]{gao2025self}
Gao, Y., Ji, Z., Wang, Y., Qi, B., Xu, H., and Zhang, L.
\newblock Self speculative decoding for diffusion large language models.
\newblock \emph{arXiv preprint arXiv:2510.04147}, 2025.

\bibitem[Hu et~al.(2025)Hu, Das, Sadigh, and Anari]{hu2025diffusion}
Hu, H., Das, A., Sadigh, D., and Anari, N.
\newblock Diffusion models are secretly exchangeable: Parallelizing {DDPM}s via
  auto speculation.
\newblock In Singh, A., Fazel, M., Hsu, D., Lacoste-Julien, S., Berkenkamp, F.,
  Maharaj, T., Wagstaff, K., and Zhu, J. (eds.), \emph{Proceedings of the 42nd
  International Conference on Machine Learning}, volume 267 of
  \emph{Proceedings of Machine Learning Research}, pp.\  24270--24289. PMLR,
  13--19 Jul 2025.

\bibitem[Kim et~al.(2025)Kim, Shah, Kontonis, Kakade, and Chen]{kim2025train}
Kim, J., Shah, K., Kontonis, V., Kakade, S.~M., and Chen, S.
\newblock Train for the worst, plan for the best: Understanding token ordering
  in masked diffusions.
\newblock In \emph{Forty-second International Conference on Machine Learning},
  2025.

\bibitem[Leviathan et~al.(2023)Leviathan, Kalman, and
  Matias]{leviathan2023fast}
Leviathan, Y., Kalman, M., and Matias, Y.
\newblock Fast inference from transformers via speculative decoding.
\newblock In \emph{International Conference on Machine Learning}, pp.\
  19274--19286. PMLR, 2023.

\bibitem[Li et~al.(2024)Li, Wei, Zhang, and Zhang]{li2024eagle}
Li, Y., Wei, F., Zhang, C., and Zhang, H.
\newblock {EAGLE}: Speculative sampling requires rethinking feature
  uncertainty.
\newblock In \emph{International Conference on Machine Learning}, 2024.

\bibitem[Miao et~al.(2024)Miao, Oliaro, Zhang, Cheng, Wang, Zhang, Wong, Zhu,
  Yang, Shi, et~al.]{miao2024specinfer}
Miao, X., Oliaro, G., Zhang, Z., Cheng, X., Wang, Z., Zhang, Z., Wong, R.
  Y.~Y., Zhu, A., Yang, L., Shi, X., et~al.
\newblock Specinfer: Accelerating large language model serving with tree-based
  speculative inference and verification.
\newblock In \emph{Proceedings of the 29th ACM International Conference on
  Architectural Support for Programming Languages and Operating Systems, Volume
  3}, pp.\  932--949, 2024.

\bibitem[Nie et~al.(2026)Nie, Zhu, You, Zhang, Ou, Hu, Zhou, Lin, Wen, and
  Li]{nie2025large}
Nie, S., Zhu, F., You, Z., Zhang, X., Ou, J., Hu, J., Zhou, J., Lin, Y., Wen,
  J.-R., and Li, C.
\newblock Large language diffusion models.
\newblock \emph{Advances in Neural Information Processing Systems},
  38:\penalty0 50608--50646, 2026.

\bibitem[Sahoo et~al.(2024)Sahoo, Arriola, Schiff, Gokaslan, Marroquin, Chiu,
  Rush, and Kuleshov]{sahoo2024simple}
Sahoo, S., Arriola, M., Schiff, Y., Gokaslan, A., Marroquin, E., Chiu, J.,
  Rush, A., and Kuleshov, V.
\newblock Simple and effective masked diffusion language models.
\newblock \emph{Advances in Neural Information Processing Systems},
  37:\penalty0 130136--130184, 2024.

\bibitem[Song et~al.(2025)Song, Zhang, Luo, Gao, Xia, Luo, Li, Yang, Yu, Qu,
  et~al.]{song2025seed}
Song, Y., Zhang, Z., Luo, C., Gao, P., Xia, F., Luo, H., Li, Z., Yang, Y., Yu,
  H., Qu, X., et~al.
\newblock Seed diffusion: A large-scale diffusion language model with
  high-speed inference.
\newblock \emph{arXiv preprint arXiv:2508.02193}, 2025.

\bibitem[Wu et~al.(2025)Wu, Zhang, Xue, Liu, Diao, Zhu, Luo, Han, and
  Xie]{wu2025fast}
Wu, C., Zhang, H., Xue, S., Liu, Z., Diao, S., Zhu, L., Luo, P., Han, S., and
  Xie, E.
\newblock Fast-dllm: Training-free acceleration of diffusion llm by enabling kv
  cache and parallel decoding.
\newblock \emph{arXiv preprint arXiv:2505.22618}, 2025.

\bibitem[Xia et~al.(2023)Xia, Ge, Wang, Chen, Wei, and Sui]{xia2022speculative}
Xia, H., Ge, T., Wang, P., Chen, S.-Q., Wei, F., and Sui, Z.
\newblock Speculative decoding: Exploiting speculative execution for
  accelerating seq2seq generation.
\newblock In \emph{The 2023 Conference on Empirical Methods in Natural Language
  Processing}, 2023.

\bibitem[Ye et~al.(2025)Ye, Xie, Zheng, Gao, Wu, Jiang, Li, and
  Kong]{ye2025dream}
Ye, J., Xie, Z., Zheng, L., Gao, J., Wu, Z., Jiang, X., Li, Z., and Kong, L.
\newblock Dream 7b: Diffusion large language models.
\newblock \emph{arXiv preprint arXiv:2508.15487}, 2025.

\bibitem[Zheng et~al.(2024)Zheng, Yuan, Yu, and Kong]{zheng2023reparameterized}
Zheng, L., Yuan, J., Yu, L., and Kong, L.
\newblock A reparameterized discrete diffusion model for text generation.
\newblock In \emph{First Conference on Language Modeling}, 2024.

\bibitem[Zhou et~al.(2023)Zhou, Lu, Mishra, Brahma, Basu, Luan, Zhou, and
  Hou]{zhou2023instruction}
Zhou, J., Lu, T., Mishra, S., Brahma, S., Basu, S., Luan, Y., Zhou, D., and
  Hou, L.
\newblock Instruction-following evaluation for large language models.
\newblock \emph{arXiv preprint arXiv:2311.07911}, 2023.

\end{thebibliography}
\bibliographystyle{icml2026}

\newpage
\onecolumn
\appendix

\section{Detailed Experimental Configurations}
\label{app:exp_details}

All experiments use LLaDA-Instruct-7B and Dream-Instruct-7B with xFormers attention to support non-causal masks. Hardware: 8$\times$ NVIDIA A800 GPUs (CUDA 12.6.3). We fix generation length to 512 tokens and block size to 32 for all main experiments. The dual-cache mechanism follows Fast-dLLM: at each block boundary, we compute full KV states for prefix, suffix, and block root, then cache prefix/suffix KV for the entire block while updating only block-level KV during speculation.

\section{Tree Construction Example}
\label{app:tree_example}

Figure~\ref{fig:tree_example_app} provides a concrete example of tree-based draft trajectory construction. The root block contains high-confidence tokens at positions 0-31. Child blocks explore alternative predictions at low-confidence positions (e.g., positions 12, 18, 24). Each path from root to leaf represents a complete denoising trajectory that can be verified in parallel.

\begin{figure}[h]
\centering
\includegraphics[width=0.8\textwidth]{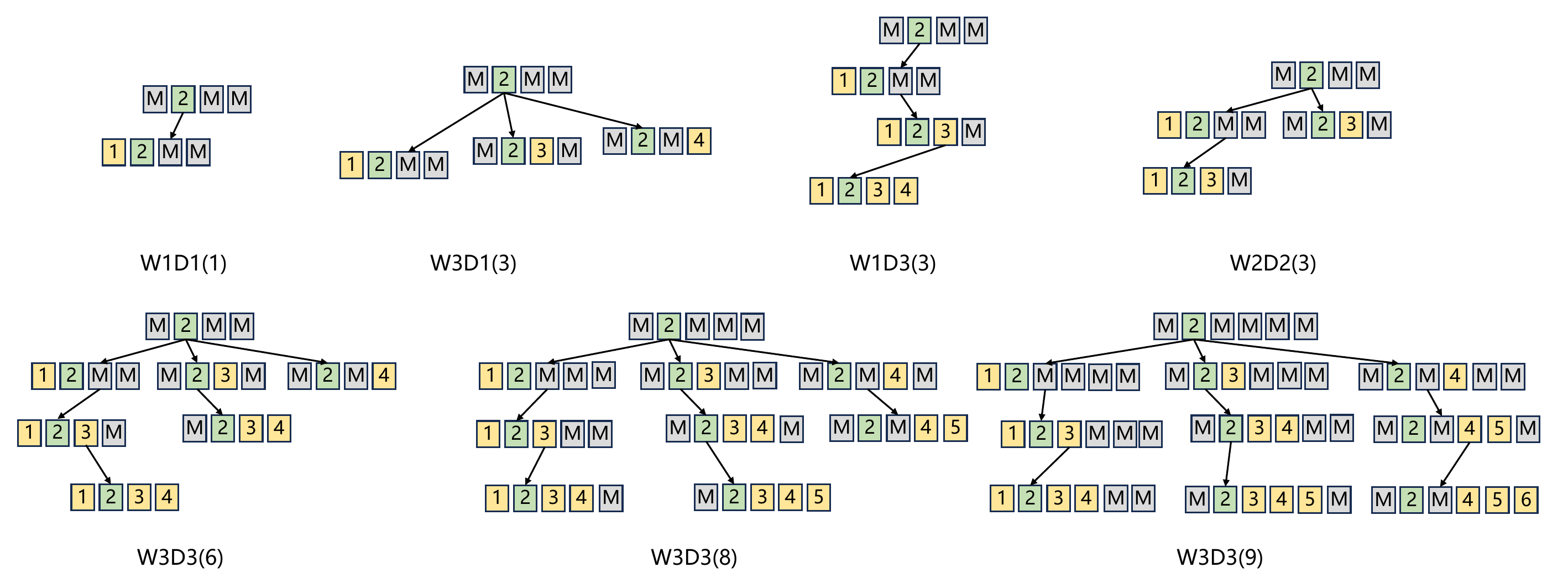}
\caption{Example of tree-based draft trajectory construction. The root block contains high-confidence tokens (positions 0-31). Child blocks explore alternative predictions at low-confidence positions 12, 18, 24. Each path represents a complete denoising trajectory verified in parallel.}
\label{fig:tree_example_app}
\end{figure}

\section{Tree Expansion Strategies}
\label{app:tree_strategies}

We explored multiple tree expansion policies:
(a) Hybrid (ours): top-$k$ at root, top-1 deeper.
(b) Full top-2: expand all layers with top-2.
(c) Full tree: symmetric expansion.
(d) Duplicate-allowed: sibling nodes may share candidates.

Table~\ref{tab:tree_expansion} shows that hybrid achieves the best latency-accuracy trade-off, as larger trees increase overhead without proportional step reduction.

\begin{table}[h]
\centering
\caption{Ablation of tree expansion strategies (HumanEval, A800).}
\label{tab:tree_expansion}
\small
\setlength{\tabcolsep}{4pt}
\begin{tabular}{lcccc}
\toprule
Strategy & Steps & Latency/Step (s) & Total Latency (s) \\
\midrule
Hybrid W2D2(3) & 124.0 & 0.043 & \textbf{5.3} \\
+ top-2 W2D2(4) & 123.9 & 0.045 & 5.6 \\
+ full tree W2D2(4) & 124.9 & 0.045 & 5.6 \\
+ duplicated W2D2(4) & 125.0 & 0.043 & 5.3 \\
\midrule
Hybrid W3D3(6) & 120.1 & 0.046 & \textbf{5.5} \\
+ top-2 W4D3(8) & 119.3 & 0.054 & 6.4 \\
+ full tree W3D3(9) & 120.3 & 0.055 & 6.6 \\
\bottomrule
\end{tabular}
\end{table}

\section{Inter-Block Speculation Details}
\label{app:interblock}

Inter-block speculation triggers when $c_{t+1}^{(1)} > c_t^{(1)}$ or $c_{t+1}^{(1)} > \tau$. We build a compact tree for $B_{t+1}$ (W1D3) and verify both trees jointly. The attention mask preserves bidirectional visibility between the roots of $B_t$ and $B_{t+1}$, while draft tokens within each block remain mutually invisible. Table~\ref{tab:interblock_ablation} shows that adding 1-2 inter-block draft blocks provides optimal speedup.

\begin{table}[h]
\centering
\caption{Inter-block ablation (LLaDA, HumanEval, A800).}
\label{tab:interblock_ablation}
\small
\setlength{\tabcolsep}{4pt}
\begin{tabular}{lccc}
\toprule
Inter-Block Setting & Tokens/Step & Steps & Latency (s) \\
\midrule
No inter-block & 3.83 & 124.0 & 5.3 \\
+ 1 draft block & 4.29 & 109.1 & 5.0 \\
+ 2 draft blocks & \textbf{4.38} & \textbf{106.7} & \textbf{5.0} \\
+ 3 draft blocks & 4.53 & 103.0 & 5.1 \\
\bottomrule
\end{tabular}
\end{table}

\section{Comparison with Concurrent Work}
\label{app:concurrent}

Spiffy~\citep{agrawal2025spiffy} is concurrent work that also explores speculative decoding for diffusion language models through intra-block speculative execution with graph-based calibration. In contrast, our method focuses on trajectory-level speculation and additionally introduces inter-block speculation across consecutive decoding blocks.

Another key difference lies in evaluation protocol. Spiffy mainly reports NFE-based speedup, while we report end-to-end TPS, latency, denoising steps, and task accuracy together. Using the reported numbers from Spiffy and our reported decoding steps on LLaDA-Instruct-7B, we obtain the following reference comparison:

\begin{table}[h]
\centering
\caption{Reference comparison with Spiffy on LLaDA-Instruct-7B.}
\label{tab:spiffy_comparison}
\begin{tabular}{lcc}
\toprule
Benchmark & Spiffy & Ours \\
\midrule
GSM8K & 3.04$\times$ & 7.18$\times$ \\
HumanEval & 2.88$\times$ & 4.87$\times$ \\
MBPP & 2.93$\times$ & 4.42$\times$ \\
MATH & 2.89$\times$ & 5.29$\times$ \\
\bottomrule
\end{tabular}
\end{table}

Our values are computed directly from the denoising steps reported in Table~\ref{tab:spiffy_comparison} using the same normalization convention, while Spiffy's numbers are taken from its reported main results. Since Spiffy is not publicly open-sourced to the best of our knowledge, this comparison should be interpreted as a reported/reference comparison rather than a controlled reproduction study.

\section{Exactness Proof and Empirical Verification}
\label{app:exactness}

\emph{Theorem 1 (full statement).}
Under cached prefix-suffix KV, no inter-block speculation, and deterministic top-1 decoding, our method and Fast-dLLM (Dual Cache) produce identical token trajectories $(i_n, y_n)$ for all $n$.

\emph{Proof.}
By induction on speculative steps. Base case: before step 1, both have empty history. Induction: if trajectories match through step $k-1$, both evaluate the same distribution $p(x_i = t \mid C, \text{history})$ at step $k$ (since $C$ is cached and unchanged). Thus, top-1 selections match, and trajectories remain identical. \qed

\emph{Empirical verification (Table~\ref{tab:exactness_empirical}).}
We run both decoders in top-1 mode on HumanEval and GSM8K; outputs and logits match exactly.

\begin{table}[h]
\centering
\caption{Empirical exactness verification.}
\label{tab:exactness_empirical}
\begin{tabular}{lcc}
\toprule
Dataset & Token Match & Logit Error \\
\midrule
HumanEval & 100\% & 0.00 \\
GSM8K & 100\% & 0.00 \\
\bottomrule
\end{tabular}
\end{table}

\section{Failure Case Analysis}
\label{app:failure}

We analyze failures on MATH (levels 1-5). Overall accuracy drop is $\sim$1\%, concentrated on level-4 problems (Table~\ref{tab:failure_analysis}). Inspection reveals that failures occur when early trajectory drift propagates through multi-step reasoning. Levels 1-3 show no degradation; level-5 (hardest) also shows minimal loss because both methods collapse to low parallelism on very hard problems.

\begin{table}[h]
\centering
\caption{Accuracy by MATH difficulty level.}
\label{tab:failure_analysis}
\begin{tabular}{lcccccc}
\toprule
Model & L1 & L2 & L3 & L4 & L5 & Avg \\
\midrule
Fast-dLLM (DC) & 61.2 & 51.4 & 38.6 & 30.4 & 10.4 & 38.4 \\
Ours & 61.3 & 52.2 & 38.7 & 25.1 & 9.2 & 37.4 \\
\bottomrule
\end{tabular}
\end{table}

\section{Long-Sequence Evaluation}

Table~\ref{tab:long_seq} evaluates generation lengths 128-2048. Our method provides consistent speedup, though relative gains decrease at longer lengths due to early [\textsc{eos}] emission causing both methods to revert to single-step-per-block decoding.

\begin{table}[h]
\centering
\caption{Latency (s) across generation lengths (HumanEval).}
\label{tab:long_seq}
\begin{tabular}{lcc}
\toprule
Length & Fast-dLLM (DC) & Ours \\
\midrule
128 & 1.5 & 1.0 \\
256 & 3.2 & 2.3 \\
512 & 6.6 & 5.0 \\
1024 & 10.2 & 8.7 \\
2048 & 22.4 & 21.2 \\
\bottomrule
\end{tabular}
\end{table}

\section{Analysis of Incremental Speedup over Fast-dLLM}
\label{app:low_conf_speedup}

The speedup over Fast-dLLM~\citep{wu2025fast} is directly governed by the prevalence of low-confidence tokens during inference.
Fast-dLLM (Dual Cache) is already a highly optimized baseline with strong parallel decoding capability; achieving an additional $\approx$1.3$\times$ speedup on top of it is therefore non-trivial.
Concretely, our method delivers $\approx$30--40\% reduction in denoising steps, $\approx$30\% reduction in end-to-end latency, and an increase in tokens-per-step from 2.6 to 4.3 ($\approx$+65\%) on top of this already-optimized system.

\textbf{The role of low-confidence tokens.}
The two approaches address orthogonal bottlenecks: Fast-dLLM reduces per-step computation cost through dual-cache reuse and parallel execution of easy (high-confidence) tokens, while our method reduces the total number of denoising steps by addressing low-confidence degeneration through trajectory-level speculation.
Consequently, our relative gains are strongest precisely when the task induces a high proportion of low-confidence tokens—the regime where threshold-based decoding collapses to near-single-token generation.
Formally, let $\rho$ denote the fraction of decoding steps in which fewer than $k$ tokens exceed the confidence threshold $\tau$.
As $\rho \to 1$ (all steps are low-confidence), the step-count under Fast-dLLM approaches the sequence length, while our method maintains multi-token acceptance via speculative branching.
As $\rho \to 0$ (all steps are high-confidence), both methods converge and the incremental benefit of speculation vanishes.

\textbf{Supplementary evaluation on IFEval.}
To illustrate the above effect in a concrete low-confidence setting, we evaluate on IFEval~\citep{zhou2023instruction}, a benchmark known for long-form instruction-following outputs that exhibit substantially lower model confidence than mathematical or code-generation tasks.
All experiments are conducted on an H800 GPU with confidence threshold $\tau = 0.9$.
As reported in Table~\ref{tab:ifeval_results}, Fast-dLLM generates an average of only 1.3 tokens per step on this task (average generation length 277.2, 210.8 steps), confirming severe low-confidence degeneration.
Our method (W2D2) reduces inference steps to 118.8—a \textbf{44\% reduction}—and achieves nearly $2\times$ improvement in TPS (36.8 $\to$ 69.1), while maintaining comparable accuracy (53.97 $\to$ 54.71).
This represents a substantially larger gain than the 1.2--1.4$\times$ observed on GSM8K and HumanEval, directly corroborating the prediction that our speedup scales with $\rho$.

\begin{table}[h]
\centering
\caption{Performance on IFEval (LLaDA-7B-Instruct, H800, $\tau=0.9$). Fast-dLLM exhibits severe low-confidence degeneration (avg.\ 1.3 tokens/step), enabling our method to achieve nearly $2\times$ speedup.}
\label{tab:ifeval_results}
\begin{tabular}{lcccc}
\toprule
Method & TPS$\uparrow$ & Steps$\downarrow$ & Accuracy & Speedup \\
\midrule
Fast-dLLM (DC) & 36.8 & 210.8 & 53.97$\pm$0.02 & 1.0$\times$ \\
Ours (W2D2) & \textbf{69.1} & \textbf{118.8} & \textbf{54.71$\pm$0.02} & \textbf{1.9$\times$} \\
\bottomrule
\end{tabular}
\end{table}

\textbf{Implementation note.}
Our current implementation operates at the Python execution level, without hardware-specific kernel fusion or CUDA-level parallelization of branch verification.
We reduce decoding steps by $\approx$40\%, yet the per-step overhead from parallel branch verification is not yet fully amortized at the systems level.
Further gains are achievable through more efficient implementations (e.g., fused CUDA kernels for blockwise attention masking), which we view as an engineering opportunity complementary to the algorithmic contributions presented in this work.

\end{document}